\def\eqref#1{Eq.~\ref{#1}}   
\def\vepsilon{{\bm{\epsilon}}}
\def\vzero{{\bm{0}}}
\def\vmu{{\bm{\mu}}}
\def\va{{\bm{a}}}
\def\vx{{\bm{x}}}
\def\vy{{\bm{y}}}
\def\mI{{\bm{I}}}
\def\mSigma{{\bm{\Sigma}}}
\newcommand{\E}{\mathbb{E}}
\newcommand{\R}{\mathbb{R}}
\newtheoremstyle{mythmstyle} 
{\topsep}    
{\topsep}    
{\itshape}   
{0pt}        
{\bfseries}  
{}           
{ }          
{}           
\theoremstyle{mythmstyle}
\newtheorem{lemma}{Lemma}
\newtheorem{proposition}{Proposition}
\newtheorem*{proposition*}{Proposition}
\newtheorem*{theorem*}{Theorem}
\newcommand{\cN}{\mathcal N}
\newcommand{\KL}{\mathop{\mathrm{KL}}\nolimits}
\definecolor{grape}{rgb}{0.43, 0.17, 0.71}
\definecolor{blush}{rgb}{0.87, 0.36, 0.51}
\definecolor{aquua}{HTML}{689d6a}
\begin{document}

\twocolumn[
\icmltitle{A Practical Diffusion Path for Sampling}

\icmlsetsymbol{equal}{*}

\begin{icmlauthorlist}
\icmlauthor{Omar Chehab}{ensae}
\icmlauthor{Anna Korba}{ensae}
\end{icmlauthorlist}

\icmlaffiliation{ensae}{ENSAE, CREST, IP Paris, France}

\icmlcorrespondingauthor{Omar Chehab}{emir.chehab@ensae.fr}
\icmlkeywords{Machine Learning, ICML}

\vskip 0.3in]

\printAffiliationsAndNotice{}

\begin{abstract}

Diffusion models are state-of-the-art methods in generative modeling when samples from a target probability distribution are available, and can be efficiently sampled, using score matching to estimate score vectors guiding a Langevin process. However, in the setting where samples from the target are not available, e.g. when this target's density is known up to a normalization constant, the score estimation task is challenging. Previous approaches rely on Monte Carlo estimators that are either computationally heavy to implement or sample-inefficient. 
In this work, we propose a computationally attractive alternative, relying on the so-called dilation path, that yields score vectors that are available in closed-form. This path interpolates between a Dirac and the target distribution using a convolution. 
We propose a simple implementation of Langevin dynamics guided by the dilation path, using adaptive step-sizes. We illustrate the results of our sampling method on a range of tasks, and shows it performs better than classical alternatives.

\end{abstract}



\section{Introduction}

\paragraph{} Drawing samples from a target distribution is a key problem in statistics. In many settings, the target distribution is known up to a normalizing constant. This is often the case for pre-trained energy-based probabilistic models~\citep[Chapter 24]{murphy2023mlbook}, whose parameters have already been estimated~\citep{hyvarinen2005scorematching,gutmann2012nce,hinton2002contrastivedivergence,gao2020fce}. Another example comes from Bayesian statistics, where the posterior model over parameters given observed data is classically known only up to a normalizing constant~\citep[Chapter 11]{wasserman2010parametric}. 

\paragraph{} Classical methods for sampling from such target distributions, such as simulating a Langevin process with particles, are known to struggle when the target has many modes. Typically, the particles are first drawn to certain modes and then take exponential time to find all other modes~\citep{bovier2000metastability,bovier2004metastability,bovier2005metastability}. Many successful, alternative methods rely on a path of distributions, chosen by the user to steer the sampling process to reach all the modes and hopefully converge faster~\citep{neal2001annealed,geyer1991markov,marinari1992simulatedtempering,dai2020smcreview}. An instance of such a method is Annealed Langevin Dynamics ~\citep{song2019scorebasedmodel,song2020scorerecommendations}, that are simple to implement and are popular in Bayesian inference~\citep[Eq. 2.4]{dai2020smcreview}, in global optimization~\citep{geman1986simulatedannealing}, and more recently in sampling from high-dimensional image distributions with many modes~\citep{song2019scorebasedmodel,song2020scorerecommendations,song2020diffusion}. In the latter application, Annealed Langevin Dynamics have achieved state-of-the-art results by following a specific path of distributions, obtained by interpolating the multi-modal target and a standard Gaussian distribution with a convolution. 

Yet, despite its promising geometry, this convolutional path of distributions is not readily usable for Annealed Langevin Dynamics, whose implementation requires the score vectors of the path which are not available in closed-form. A number of estimators for these score vectors have recently been developed when the target distribution is only known by its unnormalized density, yet these estimators can be computationally heavy~\citep{huang2024reversediffusionrecursive} or sample-inefficient~\citep{huang2024reversediffusion}. 

\paragraph{} In this work, we introduce the dilation path, which is a limit case of the popular convolutional path in which the score vectors are available in closed-form. Our approach circumvents alternatives that require Monte Carlo simulation~\citep{huang2024reversediffusion,he2024reversediffusion,grenioux2024reversediffusion,saremi2024score,akhound2024reversediffusion} and is instead exceedingly simple to implement.

\section{Background}

\paragraph{Langevin dynamics}
The Unadjusted Langevin Algorithm (ULA) is a classical algorithm to draw samples from a target distribution $\pi$. It is written as noisy gradient ascent 
\begin{align}
    \vx_{k+1} 
    = 
    \vx_k
    +
    h_k \nabla \log \pi(\vx_k) 
    + 
    \sqrt{2 h_k} \vepsilon_k
    \label{eq:langevin_sampler}
\end{align}
with step sizes $h_k > 0$ and random noise $\vepsilon_k \sim \cN(\vzero, \mI)$. It can be viewed as a time-discretization of a Langevin diffusion~\citep{borodin2017stochastic}, where time is defined as $t = k h_k$ in the limit of null step sizes $h_k \rightarrow 0$.  
The rate of convergence is determined by constants (\textit{e.g.} Log-Sobolev, Poincaré) that describe the geometry of the target distribution~\citep{vempala2019rapid}: for multimodal distributions such as Gaussian mixtures, the constant degrades exponentially fast with the distance between modes~\citep{holley1987logsobolevineq,arnold2000logsobolevineq}, making convergence with given precision too slow to occur in a reasonable number of iterations. Yet, Langevin dynamics remain a popular choice for their computational simplicity: simulating ~\eqref{eq:langevin_sampler} requires computing the score vector $\nabla \log \pi(\vx_k)$ which does not depend on the target's normalizing constant. Hence, the Langevin sampler can be used to sample target distributions whose normalizing factor is unknown, and this property is unique among a broad class of samplers~\citep{chen2023gradientflows}.

\paragraph{Annealed Langevin dynamics} Many heuristics broadly known as annealing or tempering, consist in using the Langevin dynamics to sample from a path of distributions $(\mu_t)_{t \in \R_{+}}$ instead of the single target $\pi$, in hope that these intermediate distributions decompose the original sampling problem into easier tasks for Langevin dynamics. 
We specifically consider
\begin{align}
    \vx_{k+1} 
    = 
    \vx_k 
    +
    h_k \nabla \log \mu_k(\vx_k) 
    + 
    \sqrt{2 h_k} \vepsilon_k,
    \label{eq:annealed_langevin_sampler}
\end{align}
where the target now moves with time. This process is known as Annealed Langevin Dynamics~\citep{song2019scorebasedmodel}, and is sometimes combined with other sampling processes based on resampling~\citep[Eq. 2.4]{dai2020smcreview} or that directly simulate the path $(\mu_t)_{t \in \R_{+}}$~\citep[Appendix G]{song2020diffusion}.

\paragraph{Convolutional path}
Recently, a path obtained by taking the convolution of the target distribution $\pi$ and an easier, proposal distribution $\nu$
\begin{align}
    \mu_t(\vx) 
    &= 
    \frac{1}{\sqrt{1-\lambda_t}} \nu \left(
    \frac{\vx}{\sqrt{1 - \lambda_t}}
    \right) 
    \ast
    \frac{1}{\sqrt{\lambda_t}} \pi \left(
    \frac{\vx}{\sqrt{\lambda_t}}
    \right)
    \label{eq:convolutional_path}
\end{align}
has produced state-of-the-art results in sampling from high-dimensional and multimodal distributions~\citep{song2019scorebasedmodel}. 
Here, $\nu$ is typically a standard Gaussian and $\lambda: \R_{+} \rightarrow [0, 1]$ in an increasing function called schedule~\citep{chen2023noiseschedule}: popular choices use exponential $\lambda_t = \min(1, e^{-2(T-t)})$ for some fixed $T \geq 0$, or linear $\lambda_t = \min(1, t)$ functions~\citep[Table 1]{gao2023gaussianinterpolant}. Note that the exponential schedule is initialized at $\lambda_0 = e^{-2T}$: choosing $T$ to be big (resp. small) initializes the path of guiding distributions nearer to the proposal (resp. target). Recent work has empirically observed that this convolutional path may have a more favorable geometry for the Langevin sampler than another well-established path~\citep{phillips2024particlesampler}, obtained by taking the geometric mean of the proposal and target distributions~\citep{neal1998annealing,gelman1998importancesamplingext,dai2020smcreview}. 
However, using the convolutional path in practice requires computing the score vectors $\nabla \log \mu_t(\cdot)$ which has been at the center of recent work.

\paragraph{Computing the score with access to samples}  In machine learning literature, the target distribution is often accessed through samples $\vx_{\pi} \sim \pi$ only. These can be interpolated with samples from the proposal distribution $\vx_{\nu} \sim \nu$ to produce samples from the intermediate distributions $\mu_t$~\citep{albergo2023interpolant},
\begin{align}
    \vx_{t} 
    &=
    \sqrt{1 - \lambda_{t}} \vx_{\nu}
    + 
    \sqrt{\lambda_{t}} \vx_{\pi}
    \enspace .
\end{align}
These samples can then be used to evaluate loss functions whose minimizers are estimators of the scores $\nabla \log \mu_t$~\citep{hyvarinen2005scorematching,song2019scorebasedmodel}. Recent work provides theoretical guarantees that the estimation error deterioriates favorably, that is polynomially as opposed to exponentially, with the dimensionality of the data and separation between modes~\citep{qin2023annealedscorematching}.

\paragraph{Computing the score with access to an unnormalized density} In classical statistics, it is instead assumed that the target distribution is accessed through its unnormalized density, not its samples. A novel way to compute the scores has been at the center of recent efforts to use the convolutional path in this setup~\citep{huang2024reversediffusion,he2024reversediffusion,grenioux2024reversediffusion,saremi2024score,akhound2024reversediffusion}. Overwhelmingly, these works use an explicit Monte Carlo estimator of the score
\begin{align}
    &\nabla \log \mu_t(\vx)
    =
    \frac{e^{-(T-t)}}{1 - e^{-2(T-t)}}
    \E_{\vy \sim m_t} \bigg[
    (\vy - e^{T-t} \vx)
    \bigg]
    \label{eq:monte_carlo_score}
    , \\
    &m_t(\vy | \vx) \propto 
    \pi(\vy)
    \times
    \mathcal{N}(\vy; e^{T-t}\vx, \sqrt{e^{2(T-t)} - 1} \mI
    )
    \label{eq:monte_carlo_sampling_distribution}
\end{align}
obtained by replacing the expectation with an average over finite samples. These samples are drawn from a blurred version of the target $m_t(\vy | \vx)$, specifically the  (normalized) product of the target and proposal distributions. 

Using this estimator presents three important challenges:
\begin{enumerate}

    \item The number of sampling procedures

    Each query of the score~\eqref{eq:monte_carlo_score} at a certain $\vx$ requires running a new sampling procedure~\eqref{eq:monte_carlo_sampling_distribution}. For example, each run of the algorithm~\eqref{eq:annealed_langevin_sampler} will query the score function at each iteration, and will require running as many sampling procedures. 

    \item The complexity of sampling procedures

    Standard sampling procedures for~\eqref{eq:monte_carlo_sampling_distribution} are slow to converge. For example, ~\citet{he2024reversediffusion} use the rejection sampling algorithm whose computational cost is exponential in the dimension. Alternatively, the Unadjusted Langevin Algorithm~\eqref{eq:langevin_sampler} is guaranteed to be fast-converging when the sampled distribution~\eqref{eq:monte_carlo_sampling_distribution} is log-concave, which is the case whenever the Gaussian distribution dominates the target. This is achieved by a small enough $T$ in Eq.~\ref{eq:monte_carlo_score}-~\ref{eq:monte_carlo_sampling_distribution}, or equivalently by an initial schedule $\lambda_0 = e^{-2T}$ close enough to one so that the sampling process starts near the target~\citep{huang2024reversediffusion}. Some works suppose that small enough $T$ can be found as as a hyperparameter of the problem~\citep{huang2024reversediffusionrecursive,grenioux2024reversediffusion}. Using that value, \citet{grenioux2024reversediffusion} estimate the score~\eqref{eq:monte_carlo_score} over a window near the target  $\lambda_t \in [e^{-2T}, 1]$. To estimate the score nearer the proposal, \citet{huang2024reversediffusionrecursive} use the distribution at $e^{-2T}$ as the new target, and repeat. The computational complexity of such a procedure is prohitive: it scales exponentially in the number of windows (we verify this in Appendix~\ref{app:sec:complexity_calculation}).
    Some sampling methods will altogether hide the difficulty of sampling a non log-concave distribution~\eqref{eq:monte_carlo_sampling_distribution} by supposing access to an oracle~\citep{lee2021proximalsampling,chen2024proximalsampling}.

    \item The complexity of the estimator

    Even if we were able to efficiently sample~\eqref{eq:monte_carlo_sampling_distribution}, the estimator~\eqref{eq:monte_carlo_score} introduces an estimation error in the sampling process that can scale exponentially with the dimensionality of data points~\citep{huang2024reversediffusion}. 
    
\end{enumerate}

Importantly, these three challenges disappear when the score vectors $\nabla \log \mu_t(\cdot)$ are analytically computable. Finding a path that has both the favorable the geometry of the convolutional path and analytically computable score vectors is an unresolved problem.

\section{The dilation path}

We propose to use the limit of a Dirac proposal, anticipating that it will simplify the convolution which defines the path and consequently the score. We call the corresponding path a dilation path
\begin{align}
    \mu_t(\vx) 
    & = 
    \frac{1}{\sqrt{\lambda_t}} \pi \left( \frac{\vx}{\sqrt{\lambda_t}} \right)
    \enspace .
\end{align}
Notably, the scores are now analytically available
\begin{align}
    \nabla \log \mu_t(\vx) 
    &=
    \frac{1}{\sqrt{\lambda_t}} 
    \nabla \log \pi \left(
    \frac{\vx}{\sqrt{\lambda_t}}
    \right)
    \enspace .
\end{align}
We therefore recommend this path for practitioners and verify its simplicity in the experiments of section~\ref{sec:experiments}.
We also note that considering a Dirac proposal has been known to simplify the analytical equation of a path from another problem known as Dynamic Optimal Transport or Schr{\"o}dinger bridge. In that setup, the goal is to find a path that looks random (is close in Kullback-Leibler divergence to a Wiener process) but with fixed distributions at the start and end (proposal and target). Considering a Dirac proposal simplifies the equations of that path which is then given the name of F\"{o}llmer path~\cite{ding2023follmerflow,huang2021follmer,jiao2021follmerconvergence}.

To better understand the geometry of the dilation path, we write the convolutional path between the proposal and target distributions assuming a fairly general parametric family and then consider the special case of a Dirac proposal. We 
recall the following in Appendix~\ref{app:parametric_families}
\begin{proposition}[Gaussian mixture parametric family]    
    \label{proposition:gaussian_mixture}
    Suppose the proposal is a Gaussian $\nu := \mathcal{N}(\cdot, \vzero, \mSigma_0)$  and the target is a mixture of $M$ Gaussians with means $(\vmu_m)_{m \in \llbracket 1, M \rrbracket}$, covariances $(\mSigma_m)_{m \in \llbracket 1, M \rrbracket}$, and positive weights $(w_m)_{m \in \llbracket 1, M \rrbracket}$ that sum to one.
    
    The convolutional path ~\eqref{eq:convolutional_path} produces distributions $\pi_t$ that conveniently remain in the Gaussian mixture parametric family. Their weights are constant
    $w_m(t) = w_m$ and their means and covariances interpolate additively the proposal's and target's, as $\vmu_m(t) = \sqrt{\lambda_t} \vmu_m$ and $\mSigma_m(t) = (1-\lambda_t) \mSigma_0 + \lambda_t \mSigma_m$.

    In the case of a Dirac proposal, we have instead $\mSigma_m(t) = \lambda_t \mSigma_m$.
\end{proposition}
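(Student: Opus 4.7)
The plan is to interpret the convolutional path probabilistically rather than manipulate densities directly. Namely, the identity used throughout the paper (and confirmed by the interpolation equation $\vx_t = \sqrt{1-\lambda_t}\vx_\nu + \sqrt{\lambda_t}\vx_\pi$) shows that the density in \eqref{eq:convolutional_path} is exactly the law of the independent sum $\sqrt{1-\lambda_t}\vx_\nu + \sqrt{\lambda_t}\vx_\pi$ with $\vx_\nu\sim\nu$ and $\vx_\pi\sim\pi$. Working at the level of random variables sidesteps the Jacobian bookkeeping in the scaled convolution formula.

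First I would rewrite $\pi$ hierarchically by introducing a latent component index $Z\in\llbracket 1,M\rrbracket$ with $\mathbb{P}(Z=m)=w_m$ and $\vx_\pi\mid Z=m\sim \mathcal{N}(\vmu_m,\mSigma_m)$. Conditional on $Z=m$, the variable $\vx_t=\sqrt{1-\lambda_t}\vx_\nu+\sqrt{\lambda_t}\vx_\pi$ is the independent sum of two Gaussians $\mathcal{N}(\vzero,\mSigma_0)$ and $\mathcal{N}(\vmu_m,\mSigma_m)$, each rescaled. Applying the standard stability of Gaussians under affine transforms and independent sums gives
\begin{equation*}
\vx_t\mid Z=m \;\sim\; \mathcal{N}\bigl(\sqrt{\lambda_t}\,\vmu_m,\; (1-\lambda_t)\mSigma_0+\lambda_t\mSigma_m\bigr).
\end{equation*}
Marginalizing over $Z$ yields the mixture
\begin{equation*}
\mu_t \;=\; \sum_{m=1}^M w_m\, \mathcal{N}\bigl(\sqrt{\lambda_t}\,\vmu_m,\;(1-\lambda_t)\mSigma_0+\lambda_t\mSigma_m\bigr),
\end{equation*}
which identifies the claimed weights, means and covariances.

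For the Dirac case, I would interpret $\nu=\delta_{\vzero}$ as the limit $\mSigma_0\to \vzero$ (equivalently, set $\vx_\nu\equiv\vzero$ almost surely in the interpolation). Plugging into the covariance formula immediately gives $\mSigma_m(t)=\lambda_t\mSigma_m$, while the means $\sqrt{\lambda_t}\vmu_m$ and weights $w_m$ are unaffected. I do not anticipate a genuine obstacle; the only subtle point is to check that the formula in \eqref{eq:convolutional_path} really is the density of the rescaled independent sum, which is a direct change-of-variables calculation (and is consistent with the sample interpolation formula stated right after \eqref{eq:convolutional_path}), so the entire proof reduces to one line per object (weight, mean, covariance) once the probabilistic reformulation is in place.
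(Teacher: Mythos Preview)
Your argument is correct and reaches the same conclusion as the paper, but the presentation is genuinely different. The paper works entirely at the density level: it substitutes $\nu$ and $\pi$ into \eqref{eq:convolutional_path}, applies the scaling identity $\mathcal{N}(\vx/a;\vmu,\mSigma)=a\,\mathcal{N}(\vx;a\vmu,a^2\mSigma)$ from Lemma~\ref{lemma:gaussian_transformations} to absorb the $1/\sqrt{1-\lambda_t}$ and $1/\sqrt{\lambda_t}$ prefactors, pulls the mixture sum through by linearity of convolution, and then applies the Gaussian convolution identity $\mathcal{N}(\vmu_1,\mSigma_1)\ast\mathcal{N}(\vmu_2,\mSigma_2)=\mathcal{N}(\vmu_1+\vmu_2,\mSigma_1+\mSigma_2)$ component by component. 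Your route instead identifies $\mu_t$ as the law of $\sqrt{1-\lambda_t}\,\vx_\nu+\sqrt{\lambda_t}\,\vx_\pi$, introduces a latent component label $Z$, and uses closure of Gaussians under affine maps and independent sums conditional on $Z=m$. The benefit of your formulation is that the Jacobian prefactors never appear and the bookkeeping collapses to one line; the benefit of the paper's density computation is that it does not rely on the reader accepting that \eqref{eq:convolutional_path} is the density of the stochastic interpolant (a fact you flag as the only subtle point), since it starts from the analytic definition directly. Both handle the Dirac case identically, by sending $\mSigma_0\to\vzero$.
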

Along the interpolation between a Dirac and the target, the dilation path remains a mixture with constant weights, and only the means and covariances are updated. Preserving the mode weights along the path is a desirable feature for at least two reasons. First, it has been observed that the score vector used to simulate~\eqref{eq:annealed_langevin_sampler} is known to be rather blind to mode weights when the modes are far apart~\citep{wenliang202scoreandmodeweights}. Second, it is reported that Langevin dynamics are slow-converging when initial and target weights differ, which is sometimes referred to as ``mode switching"~\citep{phillips2024particlesampler}.

\paragraph{Numerical implementation} 
A naive implementation of the annealed Langevin sampler in~\eqref{eq:annealed_langevin_sampler} with the dilation path is numerically unstable:  distributions $p_{\lambda_t}$ when $\lambda_t$ is close to zero have steep modes around which the gradient is numerically infinite. We verified in simple simulations that the particles diverge. To mitigate this effect, we use the effective numerical trick of an adaptive step size $h$ to control the magnitude of the gradient term $h \nabla \log \pi_k(\vx)$ in~\eqref{eq:annealed_langevin_sampler}. 

Recent works have adapted the step size to time, so that $h_k \propto 1 / \E[\|\nabla \log \pi_{k}(\vx_k) \|^2]$, empirically observing that it decreases with the number of iterations \citep[Section 4.3]{song2019scorebasedmodel} \citep[Section 3.3]{song2020scorerecommendations}. Their motivation is to roughly equalize magnitudes of the gradient and noise terms in \eqref{eq:annealed_langevin_sampler}: $ h_k \|\nabla \log \pi_k(\vx_k) \| \approx \sqrt{2 h_k} \| \epsilon_k \| \approx 1 / \E[\|\nabla \log \pi_k(\vx_k) \|]$. In practice, they use a proxy for $\E[\|\nabla \log \pi_{k}(\vx_k) \|^2]$ which is not computed. 

Instead, we use a finer adaptation, where the step size depends on time \textit{as well as} the current position of a particle $h_k(\vx_k) \propto 1 / \|\nabla \log \pi_k(\vx_k) \|$, or a bounded version in practice~\citep[Eq. 3.1]{leroy2024langevinadaptivestep}. Our motivation is to normalize the magnitude of the gradient term in~\eqref{eq:annealed_langevin_sampler}. Note that adapting the step size in this way alters the stationary distribution of the sampling process. A corrective step can be used~\citep[Eq. 2.14]{leroy2024langevinadaptivestep} but it involves a Hessian $\nabla^2 \log \pi(\cdot)$ which is expensive to compute in high dimensions and is outside the scope of this paper.

\section{Experiments}
\label{sec:experiments}

In this section, we numerically verify convergence using the dilation path. We constrast it with another well-established path --- obtained as the geometric mean of the proposal and target ---  whose scores are also available so that the computational budget for running Annealed Langevin Dynamics~\eqref{eq:annealed_langevin_sampler} is comparable. 

\paragraph{Convergence metrics}
To measure the discrepancy between the distribution the particles $p_k$ realizing the sampling process \eqref{eq:annealed_langevin_sampler} and the target distribution $\pi$, we use (approximations of) a number of statistical divergences. Among them, we distinguish the Kernel Stein Discrepancy (KSD)~\citep{liu2016ksd,chwialkowski2016ksd,mackey2017ksd}, defined as
\begin{align}
    \mathrm{KSD}^2(p_k, \pi)
    & = 
    \E_{(\vx, \vx') \sim 
    p_k \otimes p_k} [
    K(\vx, \vx'; \nabla \log \pi, K')
    ]
\end{align}
where $K$ and $K'$ are kernels defined in Appendix~\ref{app:sec:experiments}.
Importantly, the KSD can be approximated using what is available:  samples from $p_k$ and the score of the target $\pi$. 

The other statistical divergences we use are abbreviated as (KL, revKL, MMD, OT) and are defined in Appendix~\ref{app:ssec:statistical_divergences}. Their approximations require access to samples from both the process $p_k$ and the target $\pi$, which is not realistic as samples from the target are unavailable in practice. However, in our synthetic experiments, we are able to track these metrics. 

We note many of these statistical divergences can be blind to mode coverage, which means that a sampling process can find few modes of the target $\pi$ while ignoring other modes, and still produce low values of these metrics. In particular, this has been noted for the KSD~\citep{korba2021kernel, benard2023ksdmodeblindness} and the revKL~\citep{verine2023precisionrecall}. We therefore introduce a metric which we call the Multimodality Score (MMS), that specifically measures mode coverage. The MMS is defined as the root mean squared error between the actual and expected number of particles per mode.

\paragraph{Sampling a Gaussian mixture}
We follow the setups of~\citet{zhang2020cyclicalmcmc} and ~\citet{midgley2023flowais} where the target distribution is a Gaussian mixture with 16 and 40 modes respectively. We use a standard Gaussian as the proposal distribution, except for the dilation path which has a Dirac proposal by design. Results are reported in Figure~\ref{fig:gaussian_mixtures} and additional convergence diagnostics are reported in Appendix~\ref{app:sec:experiments}. 

\begin{figure}[!ht]
\centering
    \includegraphics[width=\linewidth]{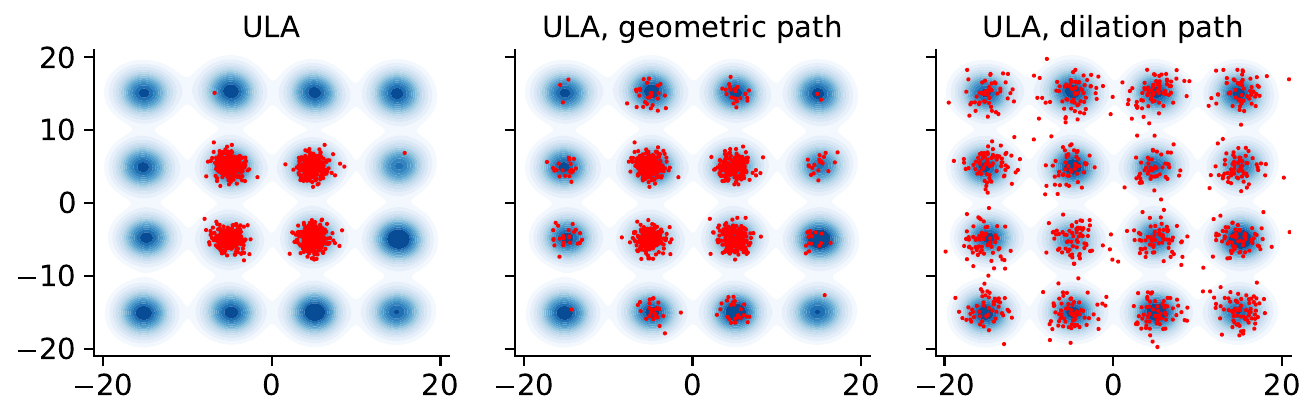} 
    \includegraphics[width=\linewidth]{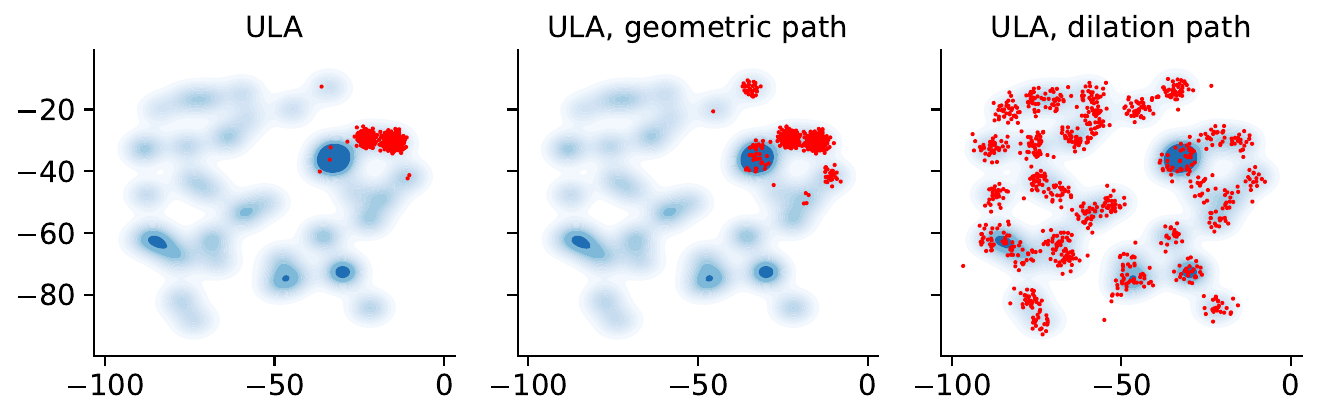} 

    \caption{
    \textit{Top}. 16-mode Gaussian mixture target and standard Gaussian proposal. \textit{Bottom}. 40-mode Gaussian mixture target. The proposal is a standard Gaussian for ULA with/out the geometric path, and is a Dirac for ULA with the dilation path. 
    The kernel density estimate of the target distribution is in blue; particles generated by the sampling process are in red. Simulations involved $1000$ particles, $10 000$ iterations, a step size of $0.001$, and a linear schedule. 
    }
    \label{fig:gaussian_mixtures}
\end{figure}

These experiments highlight known benefits of Annealed Langevin Dynamics that follow a convolutional path, rather than an alternative, geometric path or no path at all. As expected in this setting, ULA with and without the geometric path is stuck for many iterations in modes that are closest to the initialization. One may be tempted to improve consider a proposal distribution with wider coverage so that more modes are already reached at initialization, but without knowing the locations and weights of the modes, the choice of ``large enough" a Gaussian variance would be arbitrary. For example, some modes of the target could always be left in the tails of a proposal with larger variance. In contrast, ULA with the dilation path manages to recover all modes. This is reflected in most metrics in Figure~\ref{fig:fourty_modes_convergence}.

\begin{figure}
\centering
    \includegraphics[width=\linewidth]{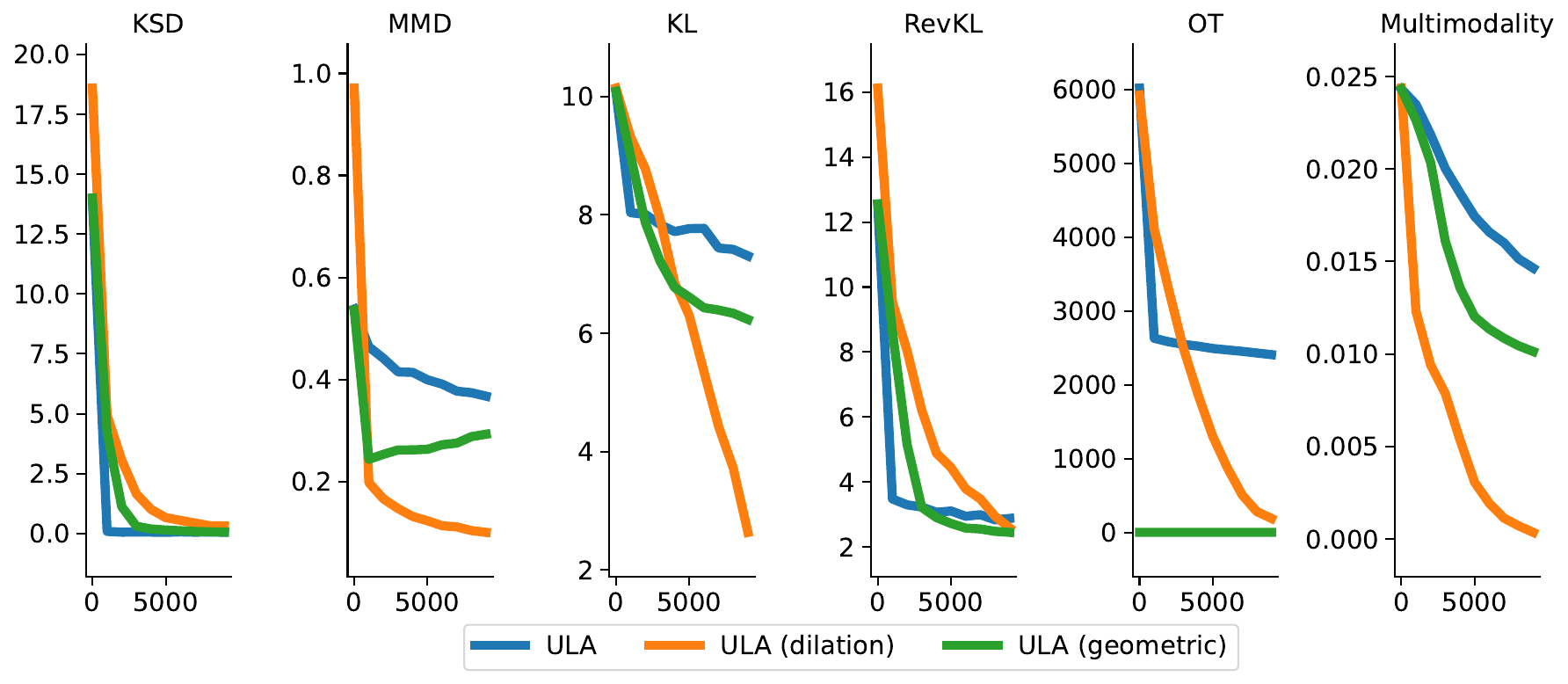} 
    \caption{
    Convergence diagnostics for sampling the 40-mode Gaussian mixture target.
    }
    \label{fig:fourty_modes_convergence}
\end{figure}

\paragraph{Sampling images}
Here, we use the dataset of images MNIST~\citep{deng2012mnist}. Each image has a resolution of $28 \times 28$ pixels and can therefore be understood as a vector in a high-dimensional space $\R^{784}$ with entries in $\llbracket 0, 255 \rrbracket$.
The target distribution is estimated from the MNIST dataset using a score-based diffusion model following~\citet{song2019scorebasedmodel}. Results are reported in Figure~\ref{fig:mnist}. 

\begin{figure}[!ht]
\centering
    \includegraphics[width=\linewidth]{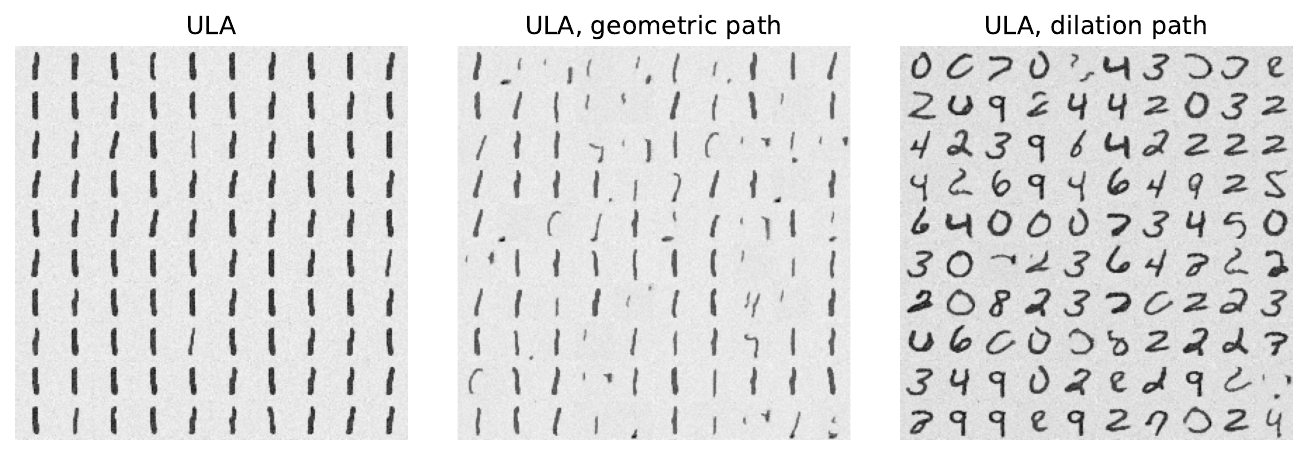} 
    \includegraphics[width=\linewidth]{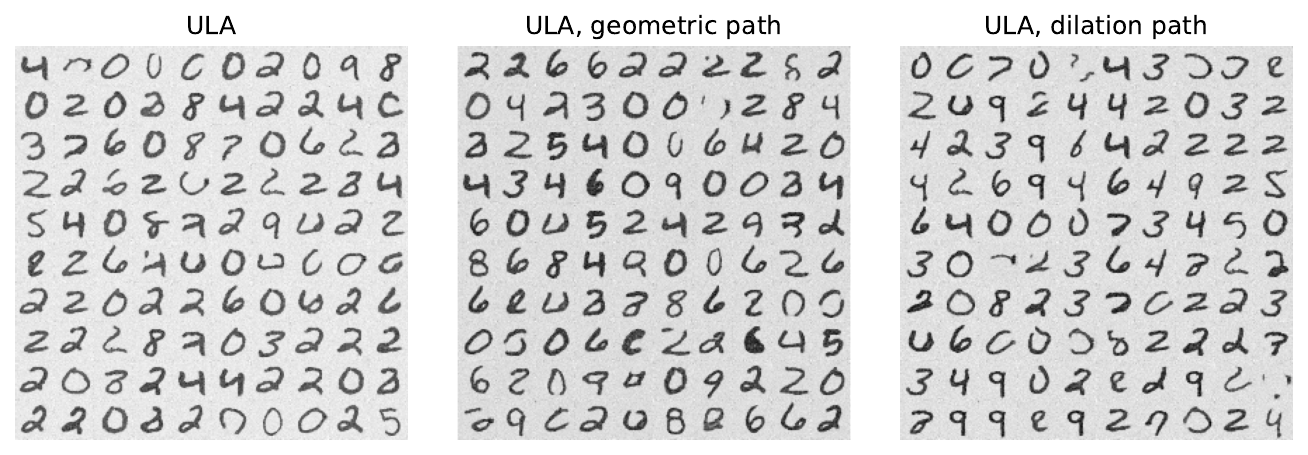} 

    \caption{
    \textit{Top}. The proposal is a standard Gaussian. \textit{Bottom}. The proposal is a uniform distribution over the pixel domain.
    In both cases, the target distribution over images is estimated from the MNIST dataset using a score-based diffusion model following~\citet{song2019scorebasedmodel}. 
    Simulations involved $100$ particles, $500$ iterations, a step size of $0.001$, and a linear schedule. 
    }
    \label{fig:mnist}
\end{figure}

ULA with the dilation path consistently finds many modes, corresponding to different digits. The other sampling schemes, ULA with/out the geometric path, find modes in different proportions depending on the initialization: for example, when initialized near the origin at the top of Figure~\ref{fig:mnist}, they predominantly find the mode for digit ``one", which is closest to the origin as we verify in Figure~\ref{fig:mnist_distances}. However, with a uniform initialization, ULA with/out the geometric path finds more modes in more balanced proportions. A careful, quantitive study to understand how the mode proportions are affected by the initialization is left for future work.

\section{Discussion}

In this work, we introduced the dilation path, which a limit case of the popular convolutional path where the score vectors are available in closed-form. We show using this path for Annealed Langevin dynamics with a step size that is adaptive to both the time and position of a particle, yields an efficient and easy to implement sampler for multi-modal distributions. While we verified that this sampler recovers the locations of the modes (mode ``coverage"), future work will be needed to verify if it correctly recovers their shapes as well (mode ``fidelity").


\paragraph{Acknowledgements}
Omar Chehab was supported by funding from the French ANR-20-CHIA-0016.
The authors thank Nina Vesseron for sharing some code for Langevin sampling of images
and Adrien Vacher for interesting conversations on the geometry of the dilation path. 

\vskip 0.2in
\bibliography{references}

\newpage

\appendix

\clearpage
\newpage
\onecolumn

\section{Computational complexity the recursive algorithm}
\label{app:sec:complexity_calculation}

We here discuss the computational complexity of the recurive algorithm of~\citet{huang2024reversediffusionrecursive}. We would like to approximate $\nabla \log \mu_k(\vx)$. We can do so using $N_p$ particles sampled from Eq.~\ref{eq:monte_carlo_sampling_distribution} using ULA with $N_i$ iterations. This will involve $N_p \times N_i$ evaluations of the target score $\nabla \log \pi(\cdot)$. Each of these scores can be approximated by repeating this procedure over $N_s$ segments. The total computational complexity is therefore $(N_p \times N_i)^{N_s}$ queries of the final target score. This is exponential in the number of segments, which is computationally prohibitive.

\section{Useful Lemma}
\label{app:sec:lemma}

\begin{lemma}[Useful identities for a Gaussian density]
\label{lemma:gaussian_transformations}
Interchangeability of mean and variance: $\mathcal{N}(\vx; \vmu, \mSigma) = \mathcal{N}(\vmu; \vx, \mSigma)$.
\\
Shift $\va \in \R^d$,
$\mathcal{N}(\vx - \va; \vmu, \mSigma) = \mathcal{N}(\vx; \vmu + \va, \mSigma)$.
\\
Scaling $a \in \R$,  
$\mathcal{N}(\vx / a; \vmu, \mSigma) = a \mathcal{N}(\vx; a \vmu, a^2 \mSigma)$.
\\
Gradient $\nabla \mathcal{N}(\vx; \vmu, \mSigma) = -\Sigma^{-1}(\vx - \vmu) \times \mathcal{N}(\vx; \vmu, \mSigma)$.
\\
Product $\mathcal{N}(x; \mu_1, \sigma_1^2) \times \mathcal{N}(x; \mu_2, \sigma_2^2)= \mathcal{N}(x; \frac{\mu_1 \sigma_1^2 + \mu_2 \sigma_2^2}{\sigma_1^2 + \sigma_2^2}, \frac{\sigma_1^2 \sigma_2^2}{\sigma_1^2 + \sigma_2^2})$
\\
Convolution $\mathcal{N}(x; \mu_1, \sigma_1^2) \times \mathcal{N}(x; \mu_2, \sigma_2^2)= \mathcal{N}(x; \mu_1 + \mu_2, \sigma_1^2 + \sigma_2^2)$.
\end{lemma}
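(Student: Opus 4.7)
The plan is to verify each of the six identities by direct substitution into the canonical density
\[
\mathcal{N}(\vx;\vmu,\mSigma) = \frac{1}{(2\pi)^{d/2}\,|\mSigma|^{1/2}}\exp\!\left(-\tfrac{1}{2}(\vx-\vmu)^{\top}\mSigma^{-1}(\vx-\vmu)\right),
\]
using only the chain rule and completion of squares on the quadratic form $Q(\vu)=\vu^{\top}\mSigma^{-1}\vu$. Each identity reduces to bookkeeping on the exponent and the normalizing constant, so the work is elementary throughout.

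The four ``symmetry'' identities go as follows. For identity~1, I would note that $Q$ is invariant under $\vu\mapsto-\vu$, hence $Q(\vx-\vmu)=Q(\vmu-\vx)$, while the normalizer depends only on $\mSigma$. Identity~2 follows by rewriting $(\vx-\va)-\vmu=\vx-(\vmu+\va)$ inside $Q$. Identity~3 is obtained by substituting $\vx/a$ and factoring $Q(\vx/a-\vmu)=a^{-2}Q(\vx-a\vmu)$, then matching the normalizer via $|a^2\mSigma|^{1/2}=|a|^{d}|\mSigma|^{1/2}$ under the single-variable or positive-$a$ convention that is consistent with the way the statement is written. Identity~4 is a direct chain-rule computation: differentiating the exponent yields $-\mSigma^{-1}(\vx-\vmu)$, and multiplying by the density gives the stated formula.

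The only steps with algebraic substance are the last two. For identity~5, I would expand $(x-\mu_1)^2/\sigma_1^2+(x-\mu_2)^2/\sigma_2^2$ as a quadratic in $x$, complete the square, and read off the mean $(\mu_1\sigma_2^2+\mu_2\sigma_1^2)/(\sigma_1^2+\sigma_2^2)$ and variance $\sigma_1^2\sigma_2^2/(\sigma_1^2+\sigma_2^2)$; the residual $x$-independent constant absorbs into a scalar prefactor, so the equality should be read modulo this prefactor, the product of two densities not being itself normalized. For identity~6, reading the $\times$ symbol as convolution $\ast$, the cleanest route is via characteristic functions: $\widehat{\mathcal{N}(\cdot;\mu,\sigma^2)}(\xi)=\exp(i\mu\xi-\tfrac{1}{2}\sigma^2\xi^2)$, convolution becomes multiplication, the exponents in $\xi$ add, and Fourier inversion yields a Gaussian with mean $\mu_1+\mu_2$ and variance $\sigma_1^2+\sigma_2^2$. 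I expect no real obstacle throughout; the only delicate point is tracking the scalar prefactor and correctly weighting the means in the product identity, where swapping the $\sigma_i^2$ weights is the most common algebraic slip.
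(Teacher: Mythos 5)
The paper states Lemma~\ref{lemma:gaussian_transformations} without proof, treating these as standard facts, so there is no in-paper argument to compare against; your direct-substitution-and-complete-the-square route is exactly the standard one and is correct. What is worth keeping from your write-up is that you quietly repair several slips in the statement itself. First, the scaling identity as printed, $\mathcal{N}(\vx/a;\vmu,\mSigma)=a\,\mathcal{N}(\vx;a\vmu,a^2\mSigma)$, is only valid in dimension one (or with the prefactor $|a|^d$ in $\R^d$), since $|a^2\mSigma|^{1/2}=|a|^d|\mSigma|^{1/2}$; you flag this, and it matters because the paper applies the identity to $d$-dimensional Gaussians in the proof of Proposition~\ref{proposition:gaussian_mixture}. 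Second, for the product identity you correctly derive the mean $\frac{\mu_1\sigma_2^2+\mu_2\sigma_1^2}{\sigma_1^2+\sigma_2^2}$, i.e.\ each mean weighted by the \emph{other} variance; the lemma as printed has the weights swapped, which is an error in the statement, and you also correctly note that the equality only holds up to the $x$-independent prefactor $\mathcal{N}(\mu_1;\mu_2,\sigma_1^2+\sigma_2^2)$, since a product of densities is not normalized. Third, you read the $\times$ in the convolution line as $\ast$ (clearly a typo in the source) and your characteristic-function argument for it is clean. No gaps in your proposal; if anything, the lemma statement should be amended to match your corrected versions.
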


\section{Parametric families}
\label{app:parametric_families}

We first prove Proposition~\ref{proposition:gaussian_mixture}, restated here
\begin{proposition*}[Gaussian mixture parametric family]    
    \label{proposition:gaussian_mixture}
    Suppose the proposal is a Gaussian $\nu := \mathcal{N}(\cdot, \vzero, \mSigma_0)$  and the target is a Gaussian mixture with $M$ modes with means $(\vmu_m)_{m \in \llbracket 1, M \rrbracket}$, covariances $(\mSigma_m)_{m \in \llbracket 1, M \rrbracket}$, and positive weights $(w_m)_{m \in \llbracket 1, M \rrbracket}$ that sum to one.
    
    Then, distributions $\pi_t$ along the convolutional path remain in the Gaussian mixture parametric family. Their weights are constant
    $w_m(t) = w_m$ and their means and covariances interpolate additively the proposal's and target's, as $\vmu_m(t) = \sqrt{t} \vmu_m$ and $\mSigma_m(t)^2 = (1-t) \mSigma_0 + t \mSigma_m$. 
\end{proposition*}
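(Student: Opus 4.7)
The plan is to reduce the claim to a direct computation using the Gaussian identities collected in Lemma~\ref{lemma:gaussian_transformations}. I would first put each factor of the convolutional path~\eqref{eq:convolutional_path} in closed form, then convolve mode by mode and read off the parameters of the resulting mixture.

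First I would rewrite the rescaled proposal. Applying the scaling identity of Lemma~\ref{lemma:gaussian_transformations} to $\nu = \mathcal{N}(\cdot;\vzero,\mSigma_0)$ with $a = \sqrt{1-\lambda_t}$ gives
\begin{align}
\frac{1}{\sqrt{1-\lambda_t}}\, \nu\!\left(\frac{\vx}{\sqrt{1-\lambda_t}}\right) = \mathcal{N}(\vx;\vzero,(1-\lambda_t)\mSigma_0).
\end{align}
Applying the same scaling identity to each component of the target and invoking linearity in the mixture yields
\begin{align}
\frac{1}{\sqrt{\lambda_t}}\, \pi\!\left(\frac{\vx}{\sqrt{\lambda_t}}\right) = \sum_{m=1}^{M} w_m\, \mathcal{N}(\vx;\sqrt{\lambda_t}\,\vmu_m,\lambda_t \mSigma_m),
\end{align}
so that the weights are left untouched while the means and covariances contract by $\sqrt{\lambda_t}$ and $\lambda_t$, respectively.

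Next I would convolve. Since convolution is bilinear, it commutes with the finite sum, giving
\begin{align}
\mu_t(\vx) = \sum_{m=1}^{M} w_m \bigl[\mathcal{N}(\cdot;\vzero,(1-\lambda_t)\mSigma_0) \ast \mathcal{N}(\cdot;\sqrt{\lambda_t}\vmu_m,\lambda_t\mSigma_m)\bigr](\vx).
\end{align}
The Gaussian convolution identity of Lemma~\ref{lemma:gaussian_transformations} (means add, covariances add) collapses the bracket into $\mathcal{N}(\vx;\sqrt{\lambda_t}\vmu_m,(1-\lambda_t)\mSigma_0+\lambda_t\mSigma_m)$. Reading off the parameters gives the claimed identifications $w_m(t) = w_m$, $\vmu_m(t) = \sqrt{\lambda_t}\vmu_m$, and $\mSigma_m(t) = (1-\lambda_t)\mSigma_0+\lambda_t\mSigma_m$. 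The Dirac-proposal statement follows by sending $\mSigma_0 \to \vzero$, which leaves $\mSigma_m(t) = \lambda_t\mSigma_m$.

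There is no real obstacle in this argument; it is essentially bookkeeping on Lemma~\ref{lemma:gaussian_transformations}. The only subtlety I would flag is that the lemma is stated in a one-dimensional form, so in $\R^d$ the scaling identity should be applied with the Jacobian $a^d$ in place of $a$. As long as the same dimensional convention is used consistently on both sides of~\eqref{eq:convolutional_path}, the scalar normalizing factors cancel and the computation above carries over verbatim.
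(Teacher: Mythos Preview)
Your proposal is correct and follows essentially the same route as the paper: rescale each factor via the scaling identity, distribute the convolution over the mixture by linearity, then apply the Gaussian convolution identity and read off the parameters. Your remark about the $a^d$ Jacobian in $\R^d$ is a valid caveat the paper glosses over, but otherwise the arguments coincide.
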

\begin{proof}
Distributions along the convolutional path between the target to the proposal, are given by
\begin{align*}
    p_{\lambda_t}(\vx)
    &=
    \frac{1}{\sqrt{1-\lambda_t}} 
    \nu \bigg(
    \frac{\vx}{\sqrt{1-\lambda_t}} 
    \bigg) 
    * 
    \frac{1}{\sqrt{\lambda_t}} 
    \pi \bigg(
    \frac{\vx}{\sqrt{\lambda_t}}
    \bigg)
    \\
    &= 
    \frac{1}{\sqrt{1-\lambda_t}}
    \mathcal{N} \bigg(
    \frac{\vx}{\sqrt{1-\lambda_t}}; \vzero, \mSigma_0
    \bigg)
    \ast 
    \sum_{i=1}^M
    w_m
    \frac{1}{\sqrt{\lambda_t}}
    \mathcal{N} \bigg(
    \frac{\vx}{\sqrt{\lambda_t}}; \vmu_m, \mSigma_m
    \bigg)
    \\
    &=
    \mathcal{N}(\vx; \vzero, (1-\lambda_t) \mSigma_0)
    * 
    \sum_{i=1}^M
    w_m
    \mathcal{N}(\vx; \sqrt{\lambda_t} \vmu_m, \lambda_t \mSigma_m)
    \\
    &=
    \sum_{i=1}^M
    w_m
    \mathcal{N}(\vx; \vzero, (1-\lambda_t)\mSigma_0)
    *
    \mathcal{N}(\vx; \sqrt{\lambda_t} \vmu_m, \lambda_t \mSigma_m)
    \\
    &=
    \sum_{i=1}^M
    w_m
    \mathcal{N}(\vx;  \sqrt{\lambda_t} \vmu_m, (1-\lambda_t) \mSigma_0 + \lambda_t \mSigma_m)
    \enspace .
\end{align*}
They conveniently remain in the Gaussian mixture parametric family, and their parameters are
\begin{align}
    w_m(t) = w_m
    \quad
    \vmu_m(t) 
    = 
    \sqrt{\lambda_t} \vmu_m
    \quad
    \mSigma_m(t) 
    = 
    (1 - \lambda_t) \mSigma_0 
    + 
    \lambda_t \mSigma_m
    \enspace .
\end{align}
If we consider the special case with the proposal covariance is $\mSigma_0 = \epsilon \mI$, then in the limit $\epsilon \rightarrow 0$ the proposal becomes a Dirac and we recover the dilation path with parameters
\begin{align}
    w_m(t) = w_m,
    \quad 
    \vmu_m(t) 
    = 
    \sqrt{\lambda_t} \vmu_m,
    \quad
    \mSigma_m(t) = 
    \lambda_t \mSigma_m
    \enspace .
\end{align}
\end{proof}

\section{Experiments}
\label{app:sec:experiments}

\subsection{Statistical divergences used for evaluation}
\label{app:ssec:statistical_divergences}

We next recall the definitions of the statistical divergences we use in our experiments, to measure the discrepancy between the distribution of particles $p_k$ realizing the sampling process and the target distribution $\pi$. 
\begin{itemize}
    \item Kernel Stein Discrepancy (KSD)~\citep{liu2016ksd,chwialkowski2016ksd,mackey2017ksd}

    Computing the KSD requires access to samples from $p_k$ and to the density (more specifically, the score) of $\pi$. 
    
    This statistical divergence is defined as
    \begin{align}
        \mathrm{KSD}^2(p_k, \pi)
        & = 
        \E_{(\vx, \vx') \sim 
        p_k \otimes p_k} [
        K(\vx, \vx'; \pi, K')
        ]
    \end{align}
    where $K$ is a kernel whose computation requires the (unnormalized) target density $\pi$ and another kernel $K'$
    \begin{align}
        K(\vx, \vy; \pi, K')
        &=
        \nabla \log \pi(\vx)^T \nabla \log \pi(\vy) K'(\vx, \vy)
        +
        \nabla \log \pi(\vx)^T \nabla_y K'(\vx, \vy)
        \\
        &+
        \nabla_x K'(\vx, \vy)^T \nabla \log \pi(\vy)
        + 
        \nabla_\vx \cdot \nabla_\vy K'(\vx, \vy)
        \enspace .
    \end{align}
    chosen by the user. We use the recommended choice by~\citet{gorham2017imqkernel}, known as the Inverse Multiquadratic Kernel $K'(\vx, \vy) = (1 + \| \vx - \vy \|_2^2)^{-\beta}$ where $\beta \in [0, 1]$ and is here chosen as $0.5$.

    \item Maximum Mean Discrepancy (MMD)~\citep{gretton2012mmd}

    Computing the MMD requires access to samples from both $p_k$ and $\pi$.  
    
    This statistical divergence is defined as
    \begin{align}
        \mathrm{MMD}^2(p_k, \pi)
        & = 
        2 
        \E_{(\vx, \vy) \sim p_k \otimes \pi}
        [K(\vx, \vy)]
        -
        \E_{(\vx, \vx') \sim p_k \otimes p_k}
        [K(\vx, \vx')]
        -
        \E_{(\vy, \vy') \sim \pi \otimes \pi}
        [K(\vy, \vy')]
    \end{align}
    where $K$ is a kernel chosen by the user. We use the Gaussian kernel $K(\vx, \vy) = \exp(-\| \vx - \vy \|_2^2 / 2h)$ with a bandwith $h = 1$. 

    \item Kullback Leibler (KL) and reverse Kullback Leibler (revKL) divergences

    Computing the KL divergence usually requires access to samples from $p_k$ and to the densities of both $p_k$ and $\pi$. The reverse KL divergence is defined by switching the roles of $p_k$ and $\pi$. In our experiments, we use an approximation implemented in the scipy library~\citep{scipy} that instead requires access to samples only, from both $p_k$ and $\pi$. 

    The KL divergence is defined as
    \begin{align}
        \KL(p_k, \pi)
        =
        \E_{\vx \sim p_k} \left[
        \log \frac{p_k(\vx)}{\pi(\vx)}
        \right]
    \end{align}
    and switching the roles of $p_k$ and $\pi$ yields the reverse KL divergence.

    \item 2-Wasserstein distance (OT)

    Computing the OT (which stands for Optimal Transport) requires access to samples from both $p_k$ and $\pi$. In our experiments, we actually do not approximate directly the OT, but an upper bound using the Sinkhorn algorithm implemented in the ott library~\citep{cuturi2022ott}. 

    The 2-Wasserstein distance is defined as
    \begin{align}
        W_2(p_k, \pi)
        =
        \inf_{c \in \mathcal{C}}  E_{(\vx, \vy) \sim c} \left[
        \| \vx - \vy \|^2
        \right]
    \end{align}
    where $\vx \sim p_k$ and $\vy \sim \pi$ and $\mathcal{C}$ is the set of joint distributions over such $(\vx, \vy)$. 
        
\end{itemize}
Last, we use the Multimodality Score (MMS) defined as the root mean squared error between the actual and expected number of particles per mode. 

\begin{figure}[!ht]
\centering
    \includegraphics[width=0.5\linewidth]{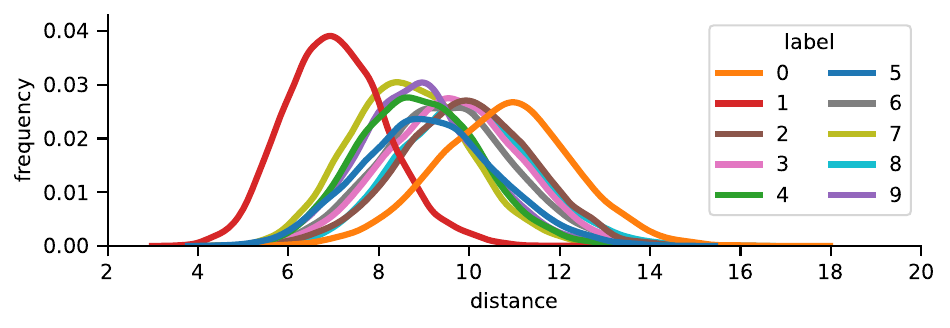} 
    \caption{
    Distribution of the Euclidean distances of image vectors to the origin for the MNIST train dataset. We use the default kernel density estimate from the Seaborn python library~\citep{seaborn}. 
    }
    \label{fig:mnist_distances}
\end{figure}

\begin{figure}[!tbp]
  \centering
  \begin{minipage}[b]{0.45\textwidth}
    \includegraphics[width=\linewidth]{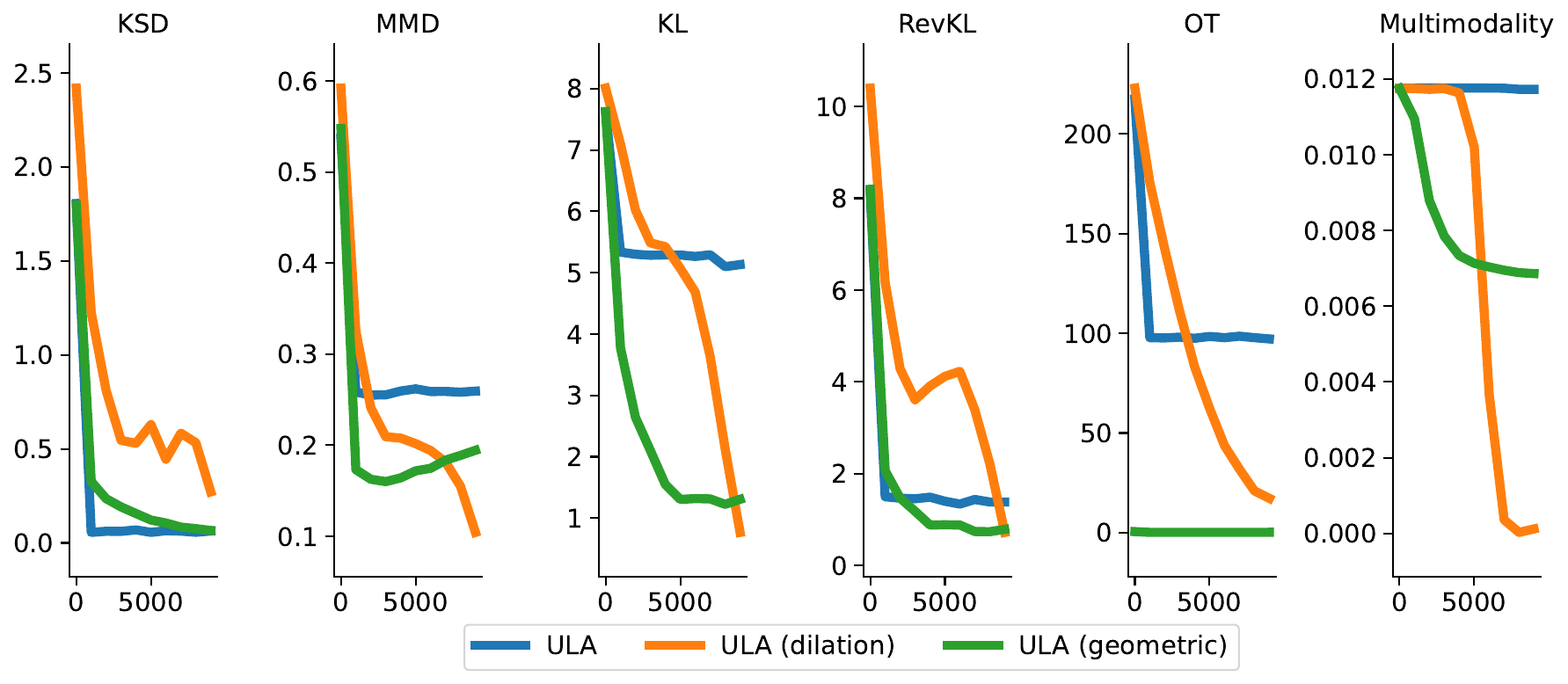} 
  \end{minipage}
  \hfill
  \begin{minipage}[b]{0.45\textwidth}
    \includegraphics[width=\linewidth]{image/fourty_modes_convergence.pdf} 
  \end{minipage}
    \label{fig:sixteen_and_fourty_modes_convergence}
    \caption{
    \textit{Left}. 
    Convergence diagnostics of the sixteen modes experiment. These divergences seem to be more sensitive to mode ``fidelity" than mode ``coverage", given that ULA seems to do better than ULA dilation. 
    \textit{Right}.     
    Convergence diagnostics of the fourty modes experiment. The MMD and KL divergences seem to be sensitive to mode ``coverage" where ULA dilation does best visually.
    }
\end{figure}

\end{document}